\newtheorem{definition}{Definition}[section]
\newtheorem{lemma}{Lemma}[section]
\newtheorem{proposition}{Proposition}[section]
\newtheorem{assumption}{Assumption}[section]
\newcommand{\nX}{\mathbf X_n}
\newcommand{\cX}{\mathcal {X}}
\newcommand{\e}{\epsilon}
\newcommand{\vxi}{\xi}
\newcommand{\x}{x}
\newcommand{\elim}{\text{ep}-\lim}
\newcommand{\hlim}{\text{hypo}-\lim}
\newcommand{\In}{\mathbb I}
\newcommand{\bbP}{\mathbb P}
\newcommand{\bbE}{\mathbb{E}}
\newcommand{\bbD}{\mathbb{D}}
\newcommand{\cQ}{\mathcal{Q}}
\newcommand{\qnv}{q^*(\mathbb \xi|\nX)}
\title{Variational Bayesian Methods for Stochastically Constrained System Design Problems}
 \author{Prateek Jaiswal{$^\star$}, Harsha Honnappa{$^\star$} and Vinayak A. Rao{$^\dag$}}
 \date{$^\star$\{jaiswalp,honnappa\}@purdue.edu School of Industrial Engineering, Purdue University\\$^\dag$\{varao@purdue.edu\} Department of Statistics, Purdue University.}
\begin{document}

\maketitle

\begin{abstract}
We study system design problems stated as parameterized stochastic programs with a chance-constraint set. We adopt a Bayesian approach that requires the computation of a posterior predictive integral which is usually intractable. In addition, for the problem to be a well-defined convex program, we must retain the convexity of the feasible set. Consequently, we propose a variational Bayes-based method to approximately compute the posterior predictive integral that ensures tractability and retains the convexity of the feasible set. Under certain regularity conditions, we also show that the solution set obtained using variational Bayes converges to the true solution set as the number of observations tends to infinity. We also provide bounds on the probability of qualifying a  true infeasible point (with respect to the true constraints) as feasible under the VB approximation for a given number of samples. 
\end{abstract}
\section{Introduction}
A general system design problem can be formally stated as the following constraint optimization problem
\vspace{-0.5em}
\begin{align*}\tag{TP}
\text{minimize}  &\quad f(\x,\vxi)\\
\text{s.t.}  &\quad g_i(\x,\vxi) \leq 0 , \ i \in \{1,2,3,\ldots, m\},
\end{align*}
where $\x \in \cX \subseteq \mathbb{R}^p$ is the input/control vector in a convex set $\cX$ and $\vxi   \in \Theta \subseteq \mathbb{R}^q$ is the system parameter vector.  The function $f(\x,\vxi): \cX \times \Theta \mapsto \mathbb{R}$ encodes the cost/risk  associated with the given values of  parameter and control variable $\vxi$ and $\x$ respectively. Similarly, the functions $g_i(\x,\vxi): \cX \times \Theta \mapsto \mathbb{R}$ define the constraints on $\vxi$ and $\x$. Under certain regularity conditions on the cost and the constraint functions, and for a given value of the true system parameter $\vxi_0$, the problem (TP) at $\vxi=\vxi_0$ can be solved to obtain the optimal control vector $\x^*$. In practice, the true system parameters are unknown and these parameters must be estimated using observed data.

In this paper, we take a Bayesian approach and model the uncertainty over the parameters $\vxi$ by computing a posterior distribution $\pi(\vxi|\nX)$  for a given prior distribution  $\pi(\vxi)$ and the likelihood $P_{\vxi}(\nX)$ of observing data $\nX$. We approximate the true problem (TP), using the posterior distribution, with the following joint chance-constrained problem:
\begin{align*}\tag{BJCCP}
\text{minimize}  &\quad \bbE_{\pi(\vxi|\nX)}[f(\x,\vxi)]\\
\text{s.t.}  &\quad \pi \left(g_i(\x,\vxi) \leq 0 , \ i \in \{1,2,3,\ldots, m\}|\nX \right) \geq \beta ,  \forall \x \in \cX,
\end{align*}
where $\beta \in (0,1)$ is the specified confidence level desired by the decision maker (DM) based on the requirement, usually $\beta>\frac{1}{2}$. We provide a supporting example~(see Appendix B) to motivate the chance-constraint formulation as opposed to using expectations, in which case the constraints are only satisfied on an average. The unconstrained version of the above problem has been studied as a special case in~\cite{jaiswal2019b}.

In practice, computing posterior distributions is challenging and mostly intractable, and is typically approximated using  Markov Chain Monte  Carlo (MCMC) or Variational Bayesian(VB) methods. MCMC methods has its own drawbacks like poor mixing, large variance, and complex diagnostics, which have been the usual motivation for using VB~\citep{BlKuMc2017}. Here, we provide another important motivation for using VB in the chance-constrained Bayesian inference setting. In particular, we present an example (motivated from~\cite{PenaOrdieres2019SolvingCP}) where a sampling based approach to approximate the chance-constraint convex feasibility set (constraint set) in (BJCCP), results in a non-convex approximation; whereas an appropriate VB approximation \textit{retains its convexity}. Therefore, we approximate (BJCCP) using a VB approximate posterior $\qnv$ to $\pi(\vxi|\nX)$ as:
\begin{align*}\tag{VBJCCP}
\text{minimize}  &\quad \bbE_{\qnv}[f(\x,\vxi)]\\
\text{s.t.}  &\quad q^* \left(g_i(\x,\vxi) \leq 0 , \ i \in \{1,2,3,\ldots, m\}|\nX \right) \geq \beta,  \forall \x \in \cX.
\end{align*}

Under certain regularity conditions, we also show that the optimizers of (VBJCCP) are consistent with those of (TP). More precisely, we show that the solution set obtained in (VBJCCP) converges to the true solution set as the number of observations, $n$ tends to infinity. We also provide bounds on the probability of qualifying a  true infeasible point (with respect to the true constraints) as feasible under the VB approximation for a given number of samples. 
As part of the future work, we want to analyze the  risk-sensitive VB approximation of the (BJCCP), where the risk is quantified as the deviation of the approximate feasibility set from the true. 

\section{Variational Bayes for Chance-Constrained System Design}

Bayesian statistics delineates natural principles to model uncertainty in parameter estimation, using observed data combined with prior knowledge. 
Let $\nX= \{X_1,X_2,\ldots,X_n\}$, be $n$ independent and identically distributed samples from the $\mathcal{F}$ measurable random vector ${X(\omega)}$ with support $\Omega \subset \mathbb{R}^{d}$ on probability space $(\Omega, \mathcal{F}, {P_{\vxi}})$, with $P_{\vxi}$ as the associated probability measure, with parameter $\vxi$.

Using the  posterior distribution $\pi(\vxi|\nX)$, we approximate (TP) as a data-driven joint chance-constrained problem, stated formally as:
\begin{align*}\tag{BJCCP}
\text{minimize}  &\quad \bbE_{\pi(\vxi|\nX)}[f(\x,\vxi)]\\
\text{s.t.}  &\quad \pi \left(g_i(\x,\vxi) \leq 0 , \ i \in \{1,2,3,\ldots, m\}|\nX \right) \geq \beta ,  \forall \x \in \cX.
\end{align*}
and $\beta \in (0,1)$ is the specified confidence level desired by the decision maker (DM) based on the requirement. 
These are the two significant challenges in solving (BJCCP):

\begin{enumerate}[leftmargin= *]
    \item \textit{Computing the posterior distribution:} While in some cases conjugate priors can be used, this  is not acceptable in most problems; resulting in an intractable computation. 
    The posterior intractability is the common motivation for using VB~\citep{BlKuMc2017} and MCMC techniques for approximate Bayesian inference. 
    
    \item \textit{Convexity of the feasibility set:} Ideally, one should expect (BJCCP) to be a convex program to take advantage of the well established  convex solvers. 
    But, even if the posterior distribution is computable, to qualify (BJCCP) as a convex program,  the feasibility set,
    \begin{align}
    \{ \x \in \cX : \pi \left(g_i(\x,\vxi) \leq 0 , \ i \in \{1,2,3,\ldots, m\}|\nX \right) \geq \beta \} 
    \label{eq:fs}
    \end{align} 
    must be convex. 
    It might be possible that the above set is not convex even when the underlying constraint  functions $g_i(\x,\vxi), i \in \{1,2,\ldots m\}$  are so (in $\x$) and thus finding a global optimum becomes challenging~\citep{Prkopa1995}.
\end{enumerate}
   
    Note that, if the constraint function has some specified structural regularity and the posterior distribution belongs to a certain class of distributions, then it can be shown that the feasibility set in~\eqref{eq:fs} is convex. For instance, it can be shown that if the constraint functions $g_i(\x,\vxi), i \in \{1,2,\ldots m\}$ are quasi-convex in $(\x,\vec  \xi)$ and the distribution is log-concave then the feasibility set in (BJCCP) is convex (\cite[Chapter 4]{ShDeRu2009} and \cite{prekopa2003}). Also,~\cite{Lagoa2005} showed that if the constraint function $g_i(\x,\vxi)$ is of the form $\{\vec  a^T \x\leq \vec b\}$, where $\vxi=(\vec a^T,\vec b)^T $ and has a symmetric log-concave density then with $\beta>\frac{1}{2}$ the feasibility set in (BJCCP) is convex.

To address the posterior intractability, Monte Carlo (MC) methods offer one way to do approximate Bayesian inference with asymptotic guarantees. However, their asymptotic guarantees are offset by issues like poor mixing, large variance and complex diagnostics in practical settings with finite computational budgets. Apart from these common issues, there is another important reason due to which any sampling-based method can not be used  directly to solve (BJCCP). Using the empirical approximation to the posterior distribution (constructed using  the samples generated from MCMC algorithm) to approximate the chance-constraint feasibility set in (BJCCP), results in a non-convex feasibility set~\citep{PenaOrdieres2019SolvingCP}. To illustrate this, consider the following simple example (modified slightly) of a chance-constraint feasibility set from~\cite{PenaOrdieres2019SolvingCP}. We plot in Figure~\ref{fig:MCVB}(a) the  following chance-constraint feasibility set
\begin{align}  \left\{\x \in R^2: \mathcal{N} \left( \vec\xi^T\x -1 \leq 0 | \mathbb \mu= [0,0]^T  ,\Sigma_A= [1,-0.1;-0.1,1]  \right) > \beta \right \} ,
\label{eq:eq1}
\end{align}
and its empirical approximator using 8000 MCMC (Metropolis-Hastings  with 3000 burn-in samples ) samples generated from the underlying correlated multivariate Gaussian distribution. We observe that the  resulting MC approximate feasibility set is non-convex.  

Therefore, due to the posterior intractability and the non-convexity of the feasible region when using sampling approaches, as an alternative, we propose to use Variational Bayes (VB) methods.

The idea behind VB is to approximate the intractable posterior
$\pi(\vxi|\nX)$ with an element $\qnv$ of a simpler \textit{variational family} $\cQ$. Examples of $\cQ$ include the family of Gaussian 
distributions, delta functions, or the family of 
factorized `mean-field' distributions that discard correlations between 
components of $\vxi$. The variational solution $q^*$ is the element of 
$\cQ$ that is
`closest' to $\pi(\vxi|\nX)$, where closeness is usually measured in the 
Kullback-Leibler (KL) sense. Thus, 
\begin{eqnarray}
\qnv := \text{argmin}_{{q} \in \mathcal{Q}} 
\text{KL}({q}(\theta)\|\pi(\vxi|\nX)). \label{eq:vb_opt}
\end{eqnarray}
Using this, we approximate (BJCCP) with, 
\begin{align*}\tag{VBJCCP}
\text{minimize}  &\quad \bbE_{\qnv}[f(\x,\vxi)]\\
\text{s.t.}  &\quad q^* \left(g_i(\x,\vxi) \leq 0 , \ i \in \{1,2,3,\ldots, m\}|\nX \right) \geq \beta,  \forall \x \in \cX,
\end{align*}
where $\beta$ is the confidence level. 
Choosing the approximation to the posterior distribution from a class of `simple' distributions would facilitate in addressing the two critical problems associated with (BJCCP). Besides the  tractability of the posterior distribution, for instance, using the results in~\citet{prekopa2003} and~\cite{Lagoa2005} the choice of  a log-concave family of distributions as the approximating family could retain the convexity of the feasibility set, if the constraint functions have certain structural regularity.

Next,  we show that using the popular mean-field variational family to approximate the correlated multivariate Gaussian distribution in the same example in~\eqref{eq:eq1}, we obtain a smooth and convex approximation to the (BJCCP) feasibility set. First, we compute mean-field approximation $q_A(\vxi)$ and $q_B(\vxi)$ of $\mathcal{N} \left( \vxi  | \mathbb \mu =[0,0]^T  ,\Sigma  \right)$ for fours different covariance matrices $\Sigma$, with fixed variance $\sigma_{11}=\sigma_{22}=1$ but varying covariance $\sigma_{12}=\{-0.1,-0.025,0.025,0.1\}$.
Then, we plot the respective approximate VB chance-constraint feasibility region  
in Figure~\ref{fig:MCVB}. We observe that VB approximation provides a smooth convex approximation to the true  feasibility  set, but it could be outside the true feasibility region if the $ \xi_1$ and $\xi_2$ are positively correlated.
\begin{figure}[H]
        \begin{subfigure}[b]{0.24\textwidth}%
            \includegraphics[width=\textwidth]{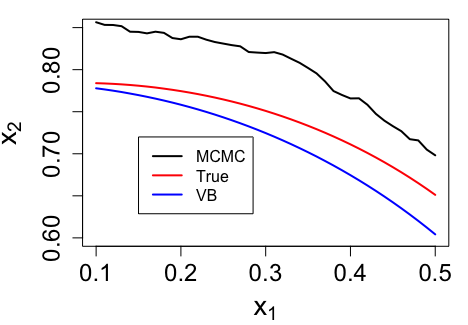}%
        \caption{$\sigma^A_{12}=-0.1$}
        \end{subfigure}
        \begin{subfigure}[b]{0.24\textwidth}%
            \includegraphics[width=\textwidth]{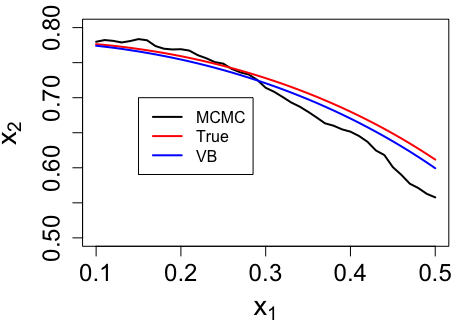}
            \caption{$\sigma^B_{12}=-0.025$}
            \end{subfigure}
        \begin{subfigure}[b]{0.24\textwidth}%
            \includegraphics[width=\textwidth]{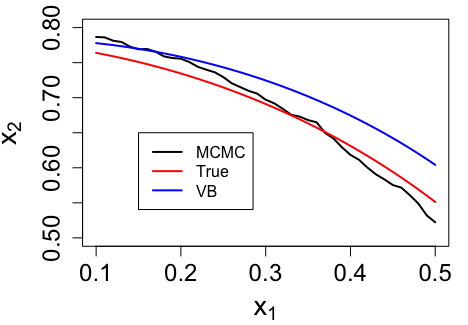}%
        \caption{$\sigma^C_{12}=0.025$}
        \end{subfigure}
        \begin{subfigure}[b]{0.24\textwidth}%
            \includegraphics[width=\textwidth]{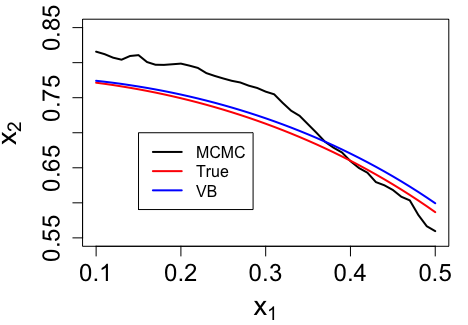}
            \caption{$\sigma^D_{12}=0.1$}
          \end{subfigure}
\caption{Feasible Region :  True Distribution vs Monte Carlo Approximation (5000 samples) vs. VB (mean field approximation).}
    \label{fig:MCVB}
\end{figure}

\subsection{Theoretical properties of (VBJCCP)}

In this section, we establish theoretical guarantees on the approximate optimal solution set $\mathcal S^*_{VB}(\nX)$ obtained using the VB approximation and show that it converges to the optimal solution set $\mathcal S^*$ of (TP) almost  surely in $P_0$. We show similar result for their corresponding optimal values $V^*_{VB}(\nX)$ and $V^*$. 
The consistency of the approximate solution follows using  techniques from the variational calculus and the consistency of the VB-approximate posterior distribution, which is proved under certain conditions  on the prior distribution, likelihood model, and the variational approximation in~\cite{WaBl2017}. 
For brevity, we state the following results without any assumptions and  proofs; it will be stated formally in Appendix C.
 \vspace{-0.3em}
\begin{proposition}\label{prop:1}
    We show that $V^*_{VB}(\nX) \to  V^*~P_{0}-a.s.$  and  $\bbD(\mathcal S^*_{VB}(\nX),\mathcal S^*) \to  0~P_{0}-a.s. ~\text{as}~n\to\infty$,
    where $\bbD( A, B) := \sup_{\x \in A} \inf_{\vec y \in B} \|\x-\vec  y\|, $
    is the distance between two sets $A$ and $B$. 
\end{proposition}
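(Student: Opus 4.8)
The plan is to recast the convergence of optimal values and solution sets as a consequence of \emph{epi-convergence} of suitably penalized objective functionals, and then invoke the standard variational-analytic machinery by which epi-convergence transfers to convergence of infima and to one-sided convergence of argmin sets. Throughout I write $\mathcal{C}_n := \{\x \in \cX : q^*(g_i(\x,\vxi) \le 0, \ i \in \{1,\ldots,m\} \mid \nX) \ge \beta\}$ for the VB feasible set and $\mathcal{C}_0 := \{\x \in \cX : g_i(\x,\vxi_0) \le 0, \ i \in \{1,\ldots,m\}\}$ for the true feasible set, and encode the constraints through the extended-real-valued objectives
\begin{equation*}
F_n(\x) := \bbE_{\qnv}[f(\x,\vxi)] + \iota_{\mathcal{C}_n}(\x), \qquad F(\x) := f(\x,\vxi_0) + \iota_{\mathcal{C}_0}(\x),
\end{equation*}
where $\iota_A$ is the convex indicator ($0$ on $A$, $+\infty$ off $A$). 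Then $V^*_{VB}(\nX) = \inf_{\x} F_n(\x)$ and $V^* = \inf_{\x} F(\x)$, while $\mathcal S^*_{VB}(\nX) = \argmin F_n$ and $\mathcal S^* = \argmin F$, so the goal reduces to showing $F_n \xrightarrow{e} F$, $P_0$-a.s.

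The steps I would carry out, in order, are as follows. First, I would invoke the VB posterior-consistency result of \cite{WaBl2017}: under the stated regularity conditions the variational posterior concentrates at the data-generating parameter $\vxi_0$, i.e. $\qnv \Rightarrow \delta_{\vxi_0}$ weakly, $P_0$-a.s. Second, using continuity of $f(\x,\cdot)$ together with a domination/uniform-integrability hypothesis, the portmanteau theorem gives $\bbE_{\qnv}[f(\x,\vxi)] \to f(\x,\vxi_0)$ for each $\x$, which I would upgrade to locally uniform convergence of the smooth part of the objective using convexity and finiteness on the feasible set. Third, I would establish convergence of the feasible sets: writing $p_n(\x) := q^*(g_i(\x,\vxi)\le 0,\forall i \mid \nX)$, concentration forces $p_n(\x) \to 1$ at every strictly feasible $\x$ (where $g_i(\x,\vxi_0)<0$ for all $i$, since then a whole neighborhood of $\vxi_0$ lies in the constraint set) and $p_n(\x)\to 0$ at every strictly infeasible $\x$; this yields eventual VB-feasibility of strictly feasible points and eventual VB-infeasibility of strictly infeasible points. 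Fourth, I would assemble these into the two epi-convergence inequalities for $F_n \to F$: the $\liminf$ inequality ($\liminf_n F_n(\x_n) \ge F(\x)$ for every $\x_n \to \x$) follows from lower semicontinuity together with the set convergence, while the recovery inequality (a sequence $\x_n \to \x$ with $\limsup_n F_n(\x_n) \le F(\x)$) is produced for feasible $\x$ by approximating along a strictly feasible direction and using the locally uniform objective convergence from the second step. Finally, with a coercivity/level-boundedness condition on $\cX$ (or a bounded minimizing sequence), the fundamental theorem of epi-convergence delivers $\inf F_n \to \inf F$, i.e. $V^*_{VB}(\nX)\to V^*$, and the outer-limit inclusion $\limsup_n \mathcal S^*_{VB}(\nX) \subseteq \mathcal S^*$, which is exactly $\bbD(\mathcal S^*_{VB}(\nX),\mathcal S^*)\to 0$ for the one-sided excess $\bbD$.

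The main obstacle is the behavior at the boundary of $\mathcal{C}_0$. At a boundary point $\x$ with $g_j(\x,\vxi_0)=0$ for some active $j$, the constraint probability $p_n(\x)$ need not tend to $1$ --- as the concentrating posterior straddles the constraint surface it typically approaches $\tfrac12$ --- so (because $\beta>\tfrac12$) such points may fail to be VB-feasible for every finite $n$, threatening the recovery inequality. Resolving this requires a Slater-type constraint qualification guaranteeing that $\mathcal{C}_0$ equals the closure of its strict interior, so that any feasible, and in particular any optimal, $\x$ is a limit of strictly feasible points along which $p_n \to 1$; the recovery sequence is then built by a diagonal argument that simultaneously pushes into the strict interior and exploits the locally uniform objective convergence. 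A secondary difficulty is that the set convergence must hold simultaneously over $\x$ rather than merely pointwise; here I would lean on convexity of $\mathcal{C}_n$ and $\mathcal{C}_0$ --- afforded by log-concavity of the variational family, cf. \cite{prekopa2003,Lagoa2005} --- to promote the pointwise statements to the Painlev\'e--Kuratowski set convergence required for epi-convergence.
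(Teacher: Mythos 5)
Your proposal is correct in substance and rests on the same two pillars as the paper's proof: VB posterior consistency from \cite{WaBl2017} and the epi/hypo-convergence machinery of \cite{Dupacova1988}. The packaging differs. The paper keeps objective and constraints separate: it proves pointwise convergence of the constraint probability $q^*\left(g_i(\x,\vxi)\le 0,\ \forall i\mid\nX\right)$ to $\prod_{i}\In_{(-\infty,0]}(g_i(\x,\vxi_0))$ (Lemma~\ref{lem:pw}), hypo-convergence of that same functional (Lemma~\ref{lem:hypo}), and epi-convergence of $\bbE_{\qnv}[f(\cdot,\vxi)]$ (Lemma~\ref{lem:of}), then derives the $\limsup$ bound on $V^*_{VB}(\nX)$ from eventual VB-feasibility of points near $\x^*$ and the $\liminf$ bound by passing to a convergent subsequence of VB-optimizers. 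You instead fold the feasible set into an extended-real-valued objective $F_n$ and invoke the fundamental theorem of epi-convergence; this is cleaner and yields the argmin outer-limit inclusion in one stroke, at the price of needing convexity/log-concavity to upgrade pointwise feasible-set convergence to Painlev\'e--Kuratowski convergence, which the paper sidesteps by working with hypo-convergence of the probability functional directly. Your discussion of the boundary obstacle is a genuine improvement over the paper: Lemma~\ref{lem:pw} asserts $p_n(\x)\to 1$ even at points where some $g_j(\x,\vxi_0)=0$, but the final step of its proof (passing the limit inside the integral over $\|\vxi-\vxi_0\|\le\eta$) is only justified when the indicator is continuous at $\vxi_0$, i.e.\ when no constraint is active; your observation that $p_n$ can hover near $\tfrac12<\beta$ at active boundary points, and your Slater-type remedy of approximating the optimizer from the strict interior, is precisely the repair that the paper's informal assumption that $\x^*$ lies in the interior with nearby feasible points is gesturing at without making precise.
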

\vspace{-0.3em}
In the next result, we show that the solution obtained in (VBJCCP) are feasible with  high probability. Let us define the set where the true constraint $i\in\{1,2,\ldots m\}$ is satisfied as  \( F^i_0:=\{ \x \in \cX :  \{ g_i(\x,\vxi_0) \leq 0 \},   \}, \)
and VB-approximate feasibility set is denoted as
\( \hat{F}_{VB}(\nX) := \{ \x \in \cX : q^* \left(g_i(\x,\vxi) \leq 0 , \ i \in \{1,2,3,\ldots, m\}|\nX \right) \geq \beta \}. \) We prove the next result using the convergence  rate results for VB approximation in~\cite{ZhGa2019}. 
\begin{proposition}\label{prop:2}
    We show that if $\x\in \cX \backslash F_0^i$, then there exists constant $C_i>0$ for each $i\in\{1,2,\ldots m\}$, such that
    \( \bbP_0[ \x \in \hat{F}_{VB}(\nX)  ] \leq \frac{C_i}{\beta}(\e_n^2+\eta_n^2), \)
    where $\e_n^2 \to 0$ as $n \to \infty$ and 
    \(\eta_n^2 : = \frac{1}{n} \inf_{q \in \mathcal Q} \bbE_{P_0} \left[  \int_{\Theta}   	q(\vxi)  \log \frac{q(\vxi)} { \pi(\vxi | \nX)   } d\vxi  
    d\vxi \right].  \)
\end{proposition}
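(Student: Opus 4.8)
The plan is to reduce the joint chance constraint to the single constraint that $\x$ violates, pass from a probability statement about $q^*$ to an expectation via Markov's inequality, and then control that expectation with the VB contraction rate of~\cite{ZhGa2019}. Fix $\x\in\cX\setminus F_0^i$, so that $g_i(\x,\vxi_0)>0$, and set $A_i(\x):=\{\vxi\in\Theta:\ g_i(\x,\vxi)\le 0\}$. Membership $\x\in\hat F_{VB}(\nX)$ requires the \emph{joint} event $\{g_j(\x,\vxi)\le 0,\ \forall j\}$ to carry $q^*$-mass at least $\beta$, and this joint event is contained in the single event $A_i(\x)$; hence $\{\x\in\hat F_{VB}(\nX)\}\subseteq\{q^*(A_i(\x)\,|\,\nX)\ge\beta\}$. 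Taking probabilities under the sampling law $P_0$ and applying Markov's inequality gives
\begin{equation*}
\bbP_0[\x\in\hat F_{VB}(\nX)]\ \le\ \bbP_0\!\left[q^*(A_i(\x)\,|\,\nX)\ge\beta\right]\ \le\ \frac{1}{\beta}\,\bbE_{P_0}\!\left[q^*(A_i(\x)\,|\,\nX)\right].
\end{equation*}
It therefore suffices to bound the expected VB mass on $A_i(\x)$ by a constant multiple of $\e_n^2+\eta_n^2$.

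Second, I would show that $A_i(\x)$ is separated from the truth. Since $g_i(\x,\cdot)$ is continuous and $g_i(\x,\vxi_0)>0$, there is a radius $r=r(\x,i)>0$ with $g_i(\x,\vxi)>0$ whenever $\|\vxi-\vxi_0\|<r$, so $A_i(\x)\subseteq\{\vxi:\ \|\vxi-\vxi_0\|\ge r\}$. Under the identifiability and local-curvature conditions on the model (deferred to Appendix C), the statistical divergence $d(\cdot,\vxi_0)$ driving the contraction is bounded below on this Euclidean complement, i.e.\ there is $c_r>0$ with $d(\vxi,\vxi_0)\ge c_r$ for all $\vxi\in A_i(\x)$. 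Using $A_i(\x)\subseteq\{d(\vxi,\vxi_0)\ge c_r\}$ and Markov in $\vxi$,
\begin{equation*}
q^*(A_i(\x)\,|\,\nX)\ \le\ q^*\!\big(\{d(\vxi,\vxi_0)\ge c_r\}\,\big|\,\nX\big)\ \le\ \frac{1}{c_r^2}\int_{\Theta} d^2(\vxi,\vxi_0)\,q^*(\vxi\,|\,\nX)\,d\vxi.
\end{equation*}

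Finally I would take expectations and invoke the rate of~\cite{ZhGa2019}, which under the stated prior-mass and testing assumptions yields $\bbE_{P_0}\big[\int_{\Theta} d^2(\vxi,\vxi_0)\,q^*(\vxi\,|\,\nX)\,d\vxi\big]\le C(\e_n^2+\eta_n^2)$, with $\eta_n^2$ being exactly the rescaled variational gap appearing in the statement and $\e_n^2\to 0$ the frequentist contraction rate. Chaining the three displays gives $\bbP_0[\x\in\hat F_{VB}(\nX)]\le \tfrac{C}{\beta c_r^2}(\e_n^2+\eta_n^2)=:\tfrac{C_i}{\beta}(\e_n^2+\eta_n^2)$, as claimed (the constant $C_i=C/c_r^2$ depends on the binding constraint and on $\x$ through the separation radius; a bound uniform over $\x$ follows by restricting to points at a fixed Euclidean distance from $\partial F_0^i$). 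The main obstacle is this last step: verifying that the hypotheses of~\cite{ZhGa2019} hold here and, in particular, converting the Euclidean separation $\|\vxi-\vxi_0\|\ge r$ into a uniform lower bound $c_r$ on the \emph{statistical} divergence $d(\vxi,\vxi_0)$. This requires a local (e.g.\ quadratic) lower bound on $d$ near $\vxi_0$ together with a global identifiability condition ensuring $d$ stays bounded away from $0$ far from $\vxi_0$ (cleanest when $\Theta$ is compact or $d$ is coercive at infinity); one must also check that the same divergence $d$ is the one the contraction theorem controls, so that the Markov step in the second display is legitimate.
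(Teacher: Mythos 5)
Your first step coincides exactly with the paper's: reduce the joint event to the single violated constraint $i$, and apply Markov's inequality under $P_0$ to get $\bbP_0[\x\in\hat F_{VB}(\nX)]\le\frac{1}{\beta}\bbE_0[q^*(\{g_i(\x,\vxi)\le 0\}\,|\,\nX)]$. Where you diverge is in how that expected variational mass is controlled. The paper rewrites the event as $\{g_i(\x,\vxi)<g_i(\x,\vxi_0)\}$ (using $g_i(\x,\vxi_0)>0$) and then invokes Theorem 2.1 of \cite{ZhGa2019} \emph{directly}, with the custom loss $L^i_n(\theta,\theta_0)=n\sup_{\x\in\cX}\In_{(0,\infty)}(g_i(\x,\vxi_0)-g_i(\x,\vxi))$ assumed to satisfy that theorem's condition (C1); the violation probability is then literally the normalized expected loss under $q^*$, and the bound $C_i(\e_n^2+\eta_n^2)$ falls out with a constant uniform in $\x$ (the $\sup_{\x}$ is baked into the loss). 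You instead take the more classical contraction route: separate the violation set from $\vxi_0$ in Euclidean distance via continuity of $g_i(\x,\cdot)$, convert that to a lower bound on the statistical divergence $d(\cdot,\vxi_0)$, apply Chebyshev in $\vxi$, and then use the Zhang--Gao risk bound for $\bbE_{P_0}\int d^2(\vxi,\vxi_0)q^*(\vxi|\nX)\,d\vxi$. Both routes are legitimate, and you correctly flag the genuine obstacles in yours: it needs continuity of $g_i(\x,\cdot)$ in $\vxi$ (not part of the paper's Carath\'eodory assumption, which only gives continuity in $\x$) plus an identifiability condition translating Euclidean into statistical separation, and your constant $C_i=C/c_r^2$ depends on $\x$ through the separation radius, so the bound is not uniform near $\partial F_0^i$ without further restriction. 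The paper's approach avoids these issues but at the price of simply \emph{assuming} (C1) for an unusual indicator-type loss, which is itself nontrivial to verify and hides essentially the same testing/identifiability requirements your argument makes explicit.
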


\bibliographystyle{apalike} 
\bibliography{refs.bib} 

\appendix

\section{A System Design Problem}\label{apd:first}
To illustrate the system design problem in (TP) with an example, we model a queueing system and show that the optimal staffing problem aptly fits into the above framework.
Consider a staffing problem where the decision maker (DM) has to decide the optimal number of servers, $c$, after observing the arrival and service data in an $M/M/c$ queueing system; a queueing system where the inter-arrival times and service times are exponentially distributed (Markovian) with $c$ number  of servers, is denoted as $M/M/c$ queueing system. We assume that the rate parameters of the exponentially distributed inter-arrival and service times distribution are unknown and denoted as $\lambda$ and $\mu$  respectively. Note that $\lambda$ and $\mu$, combined together form the system parameter $\xi = \{\lambda ,\mu\}$  and  the number of servers $c$ is the control/input variable.
The DM first uses a single server and collects data after the system reaches its `steady state'. Since the DM observed congestion in the queues, he/she decides to employ more servers.  The DM collects $n$ realizations of the random vector $\mathcal{V}:=\{T,S,E\}$, denoted  as $\nX :=\{\mathcal{V}_1,\ldots \mathcal{V}_n\}$ where $T$, $S$, and $E$ are the random variables denoting the arrival, service-start, and service-end time of each customer $i \in \{1,2,\ldots  n\}$ respectively. We also assume that there is no time lag between the two successive states for any customer and the inter-arrival and service times are independent, that is $T_i-T_{i-1}$ is independent of $E_i-S_{i}$ for each  $i \geq 1$. The joint likelihood of the arrival and departure times for $n$ customers is
\[P_{\vxi}(\nX) := \prod_{i=1}^{n} \lambda e^{-\lambda(T_i-T_{i-1})} \mu e^{-\mu(E_i-S_{i})} .\]
\textbf{Constraint functions:} The DM chooses the number of servers $c$ to maintain a constant measure of congestion. Congestion is usually measured as $1-W_q(c,\lambda,\mu)$, where $W_q(c,\lambda,\mu)$ is the probability that the customer did not wait in the queue. The closed-form expression for $1-W_q(c,\lambda,\mu)$ is  known to be (see~\cite{Gross2008})
\[ 1-W_q(c,\lambda,\mu) = \frac{r^c}{c!(1-\rho)} \Bigg/ \left(\frac{r^c}{c!(1-\rho)} + \sum_{t=0}^{c-1}\frac{r^t}{t!} \right), \]
where $r= \frac{\lambda}{\mu} \text{ and } \rho =\frac{r}{c}$ with $\rho<1$. $\rho $ is also known as \textit{traffic intensity} and for a stable queue $\rho<1$. DM fixes $\alpha$, which is maximum desired fraction of customers delayed in queue and the smallest $c$ is chosen that satisfies:
\[  ( \alpha - \{1-W_q(c,\lambda,\mu)\} ) > 0 \text{ and } (c \mu - \lambda) > 0.   \]
Referring to the queueing literature,  we will use the term the Quality of Service\textsc{(QoS)} constraint for the first constraint. The corresponding constraint optimization problem is
\begin{align*}\tag{TP-Q}
\text{minimize} &\quad c \\
\text{s.t.}  &\quad ( \alpha - \{1-W_q(c,\xi)\} ) > 0 \quad \textsc{(QoS)},\\
&\quad (c \mu - \lambda) > 0.
\end{align*}
The above staffing problem and its variations are well studied in the  queueing literature; interested reader may refer to~\cite{Gans2003} and~\cite{Aksin2009}.

\section{Other Data-Driven Approaches to solve (TP)}\label{apd:second}
Since the system parameters are unknown in  practice, these are usually estimated using the observed data $\nX$. The simplest approach could be to substitute the maximum likelihood estimates(MLE) $\hat \xi(\nX)$ of the parameters $\xi$ in the (TP) and solve the following  approximate  problem:
\begin{align*}\tag{TP-MLE}
\text{minimize}  &\quad f(x,\hat \xi)\\
\text{s.t.}  &\quad g_i(x,\hat \xi) \leq 0 , \ i \in \{1,2,3,\ldots, m\}.
\end{align*}
We solved the queueing  staffing problem in~Appendix A using the  MLE approach on simulated data ($n$ observations, with $n$ in  50-400 in increments of 50) and computed the approximate optimal number of servers denoted  as  $C^n_{MLE}$. We repeated this experiment  over 100 sample paths and computed $\phi(C^n_{MLE})$, the  fraction of experiments $C^n_{MLE}$ violates the QoS constraint. Table~\ref{tab:MLE_Staffing} shows that the QoS constraint is violated in over 50\% of the experiments.
\begin{table}[h]
    \centering
    \resizebox{0.85\textwidth}{!}{%
        \begin{tabular}{|l|l|l|l|l|l|l|l|l|}
            \hline
            $n$ & 50 & 100 & 150 & 200 & 250 & 300 & 350 & 400 \\ \hline
            $\phi(C^n_{MLE})$& 0.52 & 0.56 & 0.57 & 0.57 & 0.61 & 0.62 & 0.58 & 0.56 \\ \hline
        \end{tabular}%
    }
\vspace{0.5em}
    \caption{Fraction of times $C^n_{MLE}$ violates QoS constraint. $\lambda_0= 16,\mu_0=4,\alpha=0.37 $.}
    \label{tab:MLE_Staffing}
\end{table}  

It is anticipated that the MLE approach is unable to capture the  uncertainty in parameter estimation therefore an alternative method is proposed using forecasting  techniques. In this approach  first, the uncertainty over the parameter estimation is captured by forecasting a probability distribution $P(\xi)$ over the system parameters and then the forecast distribution is used to solve the (TP) problem using one of the following two formulations:
\begin{itemize}
    \item Average-Constraint(AC) 
    \begin{align*}\tag{TP-FAC}
    \text{minimize}  &\quad \bbE_{P}[f(x, \xi)]\\
    \text{s.t.}  &\quad \bbE_{P}[g_i(x, \xi)] \leq 0 , \ i \in \{1,2,3,\ldots, m\},
    \end{align*}
    
    \item Chance-Constraint(CC) 
    \begin{align*}\tag{TP-FCC}
    \text{minimize}  &\quad \bbE_{P}[f(x, \xi)]\\
    \text{s.t.}  &\quad P \left\{ g_i(x, \xi) \leq 0 , \ i \in \{1,2,3,\ldots, m\} \right\} > \beta,
    \end{align*}
    where $\beta\in (0,1)$ is the \textit{confidence level}. 
\end{itemize}

Now consider a simple example from~\cite{Hong2011}, where the true problem is to find $c^*=\min \{c: \xi-c \leq 0 \} $. Since, $\xi$ is unknown the DM uses data to forecast that $\xi\sim \mathcal{N}(\cdot|0,1)$ is normally distributed. Using the AC formulation, notice that the approximate optimal solution $c_A^*=\min \{c: \bbE_{\xi}[\xi]-c \leq 0 \} = 0$  and $\bbP_{\vxi}\{\xi \geq c_A^* \} =0.5$. 
The above simple example shows that AC optimal solution could violate the constraint 50\% of the times. On the other hand, CC formulation enables the DM to ensure that the approximate optimal solution satisfy the constraints with higher confidence by setting a higher confidence level($\beta$). In forecasting approach, the DM needs to forecast each time the new data is collected. We propose a principled  data-driven approach using Bayesian methods, wherein we combine forecasting and optimization. A similar approach has also been discussed in~\cite{Aktekin2016} to solve the $M/M/c$ staffing problem with  abandonment, but crucially relies on the  availability of conjugate priors. 

\section{Proofs} 
\subsection{Proof of Proposition~\ref{prop:1}  }

\begin{assumption}\label{ass:Cath}
    We assume that the function  $f(\x, \vxi)$ and $g_i(\x,\vxi), \forall i\in\{1,2,\ldots  m\}$ are \textit{Carath\'eodory} functions; that is $f(\x,\cdot)$ and $g_i(\x,\cdot)$  are measurable for every $\x \in \cX$, and $f(\cdot,\vxi)$ and $g(\cdot,\vxi)$ are continuous for almost every $\vxi \in \Theta$. We also assume that $f(\cdot,\vxi)$ is locally Lipschitz continuous in $\x$ for almost every $\vxi \in \Theta$ and  $f(\x,\cdot)$ is uniformly integrable with respect to any $q \in \cQ$, the  variational family.
    \end{assumption}
Next define an indicator function $\In_{(-\infty,0]}(t) : =  1 \text{ if $t \leq 0$ and } 0 \text{ if $t > 0$ } $. 

\begin{lemma}\label{lem:pw}
    We show that  for each $\x \in \cX$
     \[ \lim_{n \to \infty} q^* \left( \prod_{i=1}^{m}\In_{(-\infty,0]}(g_i(\x,\vxi)) |\nX \right) = \prod_{i=1}^{m}\In_{(-\infty,0]}(g_i(\x,\vxi_0))~ P_{0}-a.s.\]
\end{lemma}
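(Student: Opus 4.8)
The plan is to reduce the claim to the weak concentration of the VB posterior together with a continuity-point argument, since the integrand is an indicator of a set and not a continuous function of $\vxi$. First I would rewrite the left-hand side as the $q^*$-mass of the feasible set in parameter space. Setting $A_\x := \{\vxi \in \Theta : g_i(\x,\vxi)\le 0,\ i\in\{1,\ldots,m\}\} = \bigcap_{i=1}^m \{\vxi: g_i(\x,\vxi)\le 0\}$, we have $\prod_{i=1}^m \In_{(-\infty,0]}(g_i(\x,\vxi)) = \In_{A_\x}(\vxi)$, so that
\[ q^*\!\left(\prod_{i=1}^m \In_{(-\infty,0]}(g_i(\x,\vxi))\,\Big|\,\nX\right) = q^*(A_\x \mid \nX), \]
while the right-hand side of the claim is $\In_{A_\x}(\vxi_0)$. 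Thus the lemma asserts $q^*(A_\x\mid\nX)\to \In_{A_\x}(\vxi_0)$, $P_0$-a.s., for each fixed $\x$.

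Second, I would invoke the VB posterior consistency result (the regularity conditions of~\cite{WaBl2017}, to be stated in full in the appendix), which guarantees that $q^*(\cdot\mid\nX)$ concentrates at the true parameter: for every $\e>0$, $q^*(\{\vxi:\|\vxi-\vxi_0\|>\e\}\mid\nX)\to 0$, $P_0$-a.s., equivalently $q^*(\cdot\mid\nX)\Rightarrow\delta_{\vxi_0}$ weakly, $P_0$-a.s. By the Portmanteau theorem, weak convergence to $\delta_{\vxi_0}$ yields $q^*(A_\x\mid\nX)\to\delta_{\vxi_0}(A_\x)=\In_{A_\x}(\vxi_0)$ on every Borel set $A_\x$ whose boundary carries no $\delta_{\vxi_0}$-mass, i.e.\ whenever $\vxi_0\notin\partial A_\x$.

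Third, I would verify this continuity-point condition directly, assuming (in addition to the Carath\'eodory structure of Assumption~\ref{ass:Cath}, which I would supplement by continuity of $g_i(\x,\cdot)$ at $\vxi_0$) that $g_i(\x,\vxi_0)\neq 0$ for each $i$. If $g_i(\x,\vxi_0)<0$ for all $i$, then $\In_{A_\x}(\vxi_0)=1$ and by continuity there is a ball $N$ about $\vxi_0$ on which $g_i(\x,\cdot)<0$ for all $i$; hence $N\subseteq A_\x$, giving $1\ge q^*(A_\x\mid\nX)\ge q^*(N\mid\nX)\to 1$. If instead $g_j(\x,\vxi_0)>0$ for some $j$, then $\In_{A_\x}(\vxi_0)=0$ and there is a ball $N$ on which $g_j(\x,\cdot)>0$, so $A_\x\subseteq N^c$ and $q^*(A_\x\mid\nX)\le q^*(N^c\mid\nX)\to 0$. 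In either case the limit matches $\In_{A_\x}(\vxi_0)$, which establishes the pointwise claim.

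The main obstacle is exactly the discontinuity of the indicator integrand: weak convergence does not transfer through discontinuous functions, so the whole argument hinges on excluding the boundary event $\{g_i(\x,\vxi_0)=0\}$ and on having continuity of $g_i(\x,\cdot)$ in $\vxi$ near $\vxi_0$ (slightly stronger than Assumption~\ref{ass:Cath}, which supplies continuity only in $\x$). At points $\x$ lying on this boundary the stated limit can genuinely fail, so the result is to be understood for $\x$ with $g_i(\x,\vxi_0)\neq 0$. A secondary point requiring care is that the cited VB consistency hold in the almost-sure rather than merely in-probability mode, so that the $P_0$-a.s.\ conclusion is legitimate; this I would fold into the regularity conditions.
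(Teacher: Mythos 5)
Your proof is correct and follows essentially the same route as the paper's: both arguments localize the $q^*$-integral to a small ball around $\vxi_0$ using the $P_0$-a.s.\ consistency of the VB posterior from \cite{WaBl2017} and then evaluate the indicator on that ball. The one substantive difference is that you make explicit two conditions the final equality in the paper's proof uses silently --- continuity of $g_i(\x,\cdot)$ at $\vxi_0$ (Assumption~\ref{ass:Cath} supplies continuity only in $\x$ and mere measurability in $\vxi$) and the non-degeneracy $g_i(\x,\vxi_0)\neq 0$ --- and without these the indicator need not be constant on any neighbourhood of $\vxi_0$, so the claimed limit can genuinely fail at boundary points; your explicit treatment is a tightening of the paper's argument rather than a departure from it.
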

\begin{proof}
     Recall the result in~\cite{WaBl2017} that the VB approximate posterior $\qnv$ is consistent; that  is for every $\eta>0$.
    \begin{align} \lim_{n \to \infty} \int_{\|\vxi-\vxi_0\|>\eta} \qnv d\vxi  = 0 ~ P_{0}-a.s. 
    \label{eq:p1}
    \end{align}
    Observe that for any $\x \in \cX$ and $\eta>0$,
    \begin{align}
    \nonumber
    q^* \left(\prod_{i=1}^{m}\In_{(-\infty,0]}(g_i(\x,\vxi)) |\nX \right) &= \int_{\Theta}\prod_{i=1}^{m}\In_{(-\infty,0]}(g_i(\x,\vxi)) \qnv d\vxi
    \\
    = \int_{\|\vxi-\vxi_0\|>\eta} \prod_{i=1}^{m}\In_{(-\infty,0]}&(g_i(\x,\vxi)) \qnv d\vxi + \int_{\|\vxi-\vxi_0\|\leq \eta}\prod_{i=1}^{m}\In_{(-\infty,0]}(g_i(\x,\vxi)) \qnv d\vxi.
    \label{eq:p2}
    \end{align}
    Observe that, the result in~\eqref{eq:p1} combined with the fact that the first term in~\eqref{eq:p2} is always positive and  bounded, implies that  $\lim_{n \to \infty} \int_{\|\vxi-\vxi_0\|>\eta}\prod_{i=1}^{m}\In_{(-\infty,0]}(g_i(\x,\vxi)) \qnv d\vxi = 0~ P_{0}-a.s$.
    Now taking limits on either side of~\eqref{eq:p2}, we have
    \begin{align}
    \nonumber
    \lim_{n \to \infty} q^* \left(\prod_{i=1}^{m}\In_{(-\infty,0]}(g_i(\x,\vxi)) |\nX \right) &= \lim_{n \to \infty} \int_{\|\vxi-\vxi_0\|\leq \eta}\prod_{i=1}^{m}\In_{(-\infty,0]}(g_i(\x,\vxi)) \qnv d\vxi ~ P_{0}-a.s,
    \\
    &= \prod_{i=1}^{m}\In_{(-\infty,0]}(g_i(\x,\vxi_0)) ~ P_{0}-a.s
    \label{eq:p3}
    \end{align}
    and the lemma follows.
    \end{proof}

Next we define hypo-convergence and epi-convergence of a sequence of function $\{h_k(\x)\}$ to $h(\x)$.
\begin{definition}[Hypo-convergence]
    A sequence of functions $\{h_k(\x)\}$ hypo-converges to $h(\x)$; that is $\hlim_{n \to \infty} h_k(
    \x) = h(\x)$, if 
    \begin{enumerate}
        \item for every $\x_k\to \x$, $\limsup_{ k \to  \infty} h_k(\x_k) \leq h(\x)     $, and
        \item there exists a sequence $\x_k\to \x$, such that $\liminf_{ k \to  \infty} h_k(\x_k) \geq h(\x) $.
    \end{enumerate}
\end{definition}

\begin{definition}[Epi-convergence]
    A sequence of functions $\{h_k(\x)\}$ epi-converges to $h(\x)$; that is $\elim_{n \to \infty} h_k(
    \x) = h(\x)$, if 
    \begin{enumerate}
        \item for every $\x_k\to \x$, $\liminf_{ k \to  \infty} h_k(\x_k) \geq h(\x)     $, and
        \item there exists a sequence $\x_k\to \x$, such that $\limsup_{ k \to  \infty} h_k(\x_k) \leq h(\x) $.
    \end{enumerate}
\end{definition}

\begin{lemma}\label{lem:of}
    Under Assumption~\ref{ass:Cath}, we show that,
    \begin{enumerate}
        \item for each $\x \in \cX$,
    \( \lim_{ n \to \infty} \bbE_{\qnv}[f(\x,\vxi)]  = f(\x, \vxi_0)~ P_{0}-a.s.\)
        \item and,
    $\elim_{ n \to \infty} \bbE_{\qnv}[f(\x_n,\vxi)] = f(\x_0,\vxi_0)~ P_{0}-a.s$. 
        \end{enumerate}
\end{lemma}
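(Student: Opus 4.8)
The plan is to prove both parts by combining the posterior concentration in~\eqref{eq:p1} with the regularity in Assumption~\ref{ass:Cath}. I would view $\qnv$ as a sequence of probability measures $Q_n$ on $\Theta$; the consistency statement~\eqref{eq:p1} is exactly the assertion that $Q_n \Rightarrow \delta_{\vxi_0}$, $P_0$-a.s. Part 1 is then the statement that integrating the $\vxi$-continuous integrand $f(\x,\cdot)$ against measures collapsing onto $\vxi_0$ recovers $f(\x,\vxi_0)$, i.e. the classical ``weak convergence to a point mass plus uniform integrability implies convergence of means'' phenomenon, while Part 2 upgrades this to a statement uniform in $\x$ near $\x_0$.

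For Part 1, I would fix $\x$ and split $\bbE_{\qnv}[f(\x,\vxi)]$ over an $\eta$-ball $B_\eta(\vxi_0)$, writing it as $\int_{B_\eta} f(\x,\vxi)\qnv d\vxi + \int_{B_\eta^c} f(\x,\vxi)\qnv d\vxi$. On $B_\eta$ I would use continuity of $f(\x,\cdot)$ at $\vxi_0$ to replace $f(\x,\vxi)$ by $f(\x,\vxi_0)$ up to an error $\epsilon$, together with $Q_n(B_\eta)\to 1$ from~\eqref{eq:p1}, so the near term converges to $f(\x,\vxi_0)$ within $\epsilon$. For the tail I would run the standard uniform integrability argument: truncating at level $M$, the bounded part is at most $M\,Q_n(B_\eta^c)\to 0$, and the unbounded part is controlled uniformly in $n$ by uniform integrability of $f(\x,\cdot)$ with respect to $q\in\cQ$; sending first $n\to\infty$, then $M\to\infty$, then $\eta,\epsilon\to 0$ closes Part 1.

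For Part 2 I would in fact establish the stronger continuous-convergence property, $\bbE_{\qnv}[f(\x_n,\vxi)] \to f(\x_0,\vxi_0)$ for every $\x_n\to\x_0$, which yields both conditions in the definition of epi-convergence at once. I would decompose $\bbE_{\qnv}[f(\x_n,\vxi)] = \bbE_{\qnv}[f(\x_0,\vxi)] + \big(\bbE_{\qnv}[f(\x_n,\vxi)] - \bbE_{\qnv}[f(\x_0,\vxi)]\big)$; the first term tends to $f(\x_0,\vxi_0)$ by Part 1, and for the second I would fix a bounded neighborhood $N$ of $\x_0$ and use local Lipschitz continuity of $f(\cdot,\vxi)$ to get $|f(\x_n,\vxi)-f(\x_0,\vxi)|\le L(\vxi)\|\x_n-\x_0\|$ for $\x_n\in N$, hence $|\bbE_{\qnv}[f(\x_n,\vxi)] - \bbE_{\qnv}[f(\x_0,\vxi)]| \le \|\x_n-\x_0\|\,\bbE_{\qnv}[L(\vxi)]$. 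The recovery-sequence condition of epi-convergence then follows by taking $\x_n\equiv\x_0$ and invoking Part 1, while the $\liminf$ condition for an arbitrary $\x_n\to\x_0$ follows as soon as this difference term is shown to vanish.

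The main obstacle is the uniform-in-$n$ control of $\bbE_{\qnv}[L(\vxi)]$, the expected local Lipschitz constant under the VB posterior: without $\sup_n \bbE_{\qnv}[L(\vxi)] < \infty$ the difference term in Part 2 need not vanish. I expect to dispatch it exactly as in Part 1, splitting $\bbE_{\qnv}[L(\vxi)]$ over $B_\eta(\vxi_0)$ and its complement, using local boundedness of $L$ near $\vxi_0$ for the near part (since $Q_n(B_\eta)\le 1$) and uniform integrability of the Lipschitz constant with respect to $\cQ$, together with $Q_n(B_\eta^c)\to 0$, for the tail. This integrability of $L$ is the one genuinely delicate hypothesis to secure (presumably folded into the full statement of Assumption~\ref{ass:Cath} in the appendix); everything else is bookkeeping around the concentration in~\eqref{eq:p1}.
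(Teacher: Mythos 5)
Your argument is correct in outline but takes a genuinely different route from the paper. The paper disposes of both parts in one line by invoking Theorem 3.7 of \cite{Dupacova1988} on epi-convergence of expectation functionals under weak convergence of the underlying measures (here $\qnv \Rightarrow \delta_{\vxi_0}$ $P_0$-a.s., as guaranteed by \eqref{eq:p1}), with Assumption~\ref{ass:Cath} supplying the hypotheses of that theorem. You instead give a self-contained proof: the ball/complement splitting plus truncation for Part~1 is the standard ``weak convergence to a point mass plus uniform integrability'' argument, and for Part~2 you prove the stronger continuous-convergence statement via the local Lipschitz bound $|\bbE_{\qnv}[f(\x_n,\vxi)] - \bbE_{\qnv}[f(\x_0,\vxi)]| \leq \|\x_n - \x_0\|\,\bbE_{\qnv}[L(\vxi)]$, which indeed implies both epi- and hypo-convergence at once. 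What your approach buys is transparency: it exposes exactly which pieces of regularity are consumed where, whereas the citation hides them inside the hypotheses of the quoted theorem.

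Two caveats. First, you correctly flag that $\sup_n \bbE_{\qnv}[L(\vxi)] < \infty$ is the delicate point of Part~2 and that integrability of the local Lipschitz modulus is not literally in Assumption~\ref{ass:Cath} (which only asserts uniform integrability of $f(\x,\cdot)$ itself); your proposed fix via uniform integrability of $L$ over $\cQ$ is the right patch but does require strengthening the stated assumption. Second, and you do not flag this one: your Part~1 uses continuity of $f(\x,\cdot)$ in $\vxi$ at $\vxi_0$, but the Carath\'eodory condition as stated in Assumption~\ref{ass:Cath} gives continuity in $\x$ and only \emph{measurability} in $\vxi$. Without some form of continuity at $\vxi_0$ the conclusion of Part~1 can fail (take $f(\x,\vxi) = \In_{\{\vxi \neq \vxi_0\}}$ with $\qnv$ concentrating near but not atomically at $\vxi_0$), so this is a hypothesis the lemma genuinely needs; the paper's citation implicitly assumes it as well, but in a written-out proof you should state it explicitly.
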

\begin{proof}
    Due to Assumption~\ref{ass:Cath}, both the results above are a direct consequence of the result in~\cite[Theorem 3.7]{Dupacova1988}. 
    \end{proof}

\begin{lemma}\label{lem:hypo}
    We show that under Assumption~\ref{ass:Cath}, $q^* \left(\prod_{i=1}^{m}\In_{(-\infty,0]}(g_i(\x,\vxi)) |\nX \right)$ hypo-converges to $\prod_{i=1}^{m}\In_{(-\infty,0]}(g_i(\x,\vxi_0))$ $P_{0}-a.s$ as $n \to \infty$; that is
    \begin{align}
    \hlim_{n \to \infty} q^* \left(\prod_{i=1}^{m}\In_{(-\infty,0]}(g_i(\x,\vxi))  |\nX \right)
    &= \prod_{i=1}^{m}\In_{(-\infty,0]}(g_i(\x,\vxi_0)) ~ P_{0}-a.s.
    \end{align}
    \end{lemma}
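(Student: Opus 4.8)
The plan is to verify directly the two defining conditions of hypo-convergence for the sequence $h_n(\x) := q^* \left(\prod_{i=1}^{m}\In_{(-\infty,0]}(g_i(\x,\vxi))  |\nX \right)$ against its candidate limit $h(\x) := \prod_{i=1}^{m}\In_{(-\infty,0]}(g_i(\x,\vxi_0))$, both regarded as functions of $\x \in \cX$. The useful structural remark to record first is that each $h_n$ is a posterior probability and hence takes values in $[0,1]$, whereas the limit $h$ is $\{0,1\}$-valued: $h(\x)=1$ exactly when $\x$ satisfies every true constraint, and $h(\x)=0$ exactly when at least one true constraint is strictly violated, i.e.\ $g_{i_0}(\x,\vxi_0) > 0$ for some index $i_0$.

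For the second (recovery) condition I would take the trivial constant sequence $\x_n \equiv \x$. Lemma~\ref{lem:pw} then yields $\liminf_{n\to\infty} h_n(\x) = \lim_{n\to\infty} h_n(\x) = h(\x)$ $P_0$-a.s., which already gives $\liminf_{n\to\infty} h_n(\x_n) \ge h(\x)$. This settles the easy half and shows the candidate limit is not too large.

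The substance lies in the first (limsup) condition: for an arbitrary sequence $\x_n \to \x$ I must show $\limsup_{n\to\infty} h_n(\x_n) \le h(\x)$ $P_0$-a.s. I would split on the value of $h(\x)$. When $h(\x)=1$ there is nothing to prove since $h_n(\x_n)\le 1$. When $h(\x)=0$, fix $i_0$ with $g_{i_0}(\x,\vxi_0) > 0$; it then suffices to prove $h_n(\x_n)\to 0$. Bounding the product by its single $i_0$ factor gives $h_n(\x_n) \le q^*(g_{i_0}(\x_n,\vxi) \le 0 \mid \nX) = \int_\Theta \In_{(-\infty,0]}(g_{i_0}(\x_n,\vxi))\,\qnv\,d\vxi$. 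Using continuity of $g_{i_0}$ at $(\x,\vxi_0)$ together with $g_{i_0}(\x,\vxi_0)>0$, I would choose $\eta>0$ and $N$ so that $g_{i_0}(\x_n,\vxi) > 0$ whenever $n \ge N$ and $\|\vxi - \vxi_0\| \le \eta$, so the integrand vanishes on that $\eta$-ball. Splitting the integral as in the proof of Lemma~\ref{lem:pw}, the near-$\vxi_0$ part is then identically zero for $n \ge N$, while the far part is dominated by $\int_{\|\vxi-\vxi_0\|>\eta}\qnv\,d\vxi$, which tends to $0$ $P_0$-a.s.\ by the VB consistency statement~\eqref{eq:p1}. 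Hence $h_n(\x_n)\to 0 = h(\x)$, closing the limsup condition.

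The hard part is the step producing a neighborhood of $(\x,\vxi_0)$ on which $g_{i_0}$ stays strictly positive: Assumption~\ref{ass:Cath} supplies only continuity of $g_{i_0}(\cdot,\vxi)$ for almost every $\vxi$ and measurability in $\vxi$, i.e.\ \emph{separate} rather than joint continuity. To make the uniform positivity on $\{\|\vxi-\vxi_0\|\le\eta\}$ rigorous I would either invoke joint continuity of $g_{i_0}$ at $(\x,\vxi_0)$, or establish pointwise-in-$\vxi$ positivity (valid off the measure-zero discontinuity set, which $\qnv$ ignores) and then upgrade it to the integral statement by combining the $P_0$-a.s.\ concentration of $\qnv$ at $\vxi_0$ with a dominated-convergence argument that accommodates the $n$-dependence of the measure $\qnv$. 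This interplay between a varying integrand and a simultaneously varying, concentrating measure is the delicate point; I would also take care that the exceptional $P_0$-null set can be chosen uniformly in $\x$ and in the sequence $\x_n$, e.g.\ by restricting~\eqref{eq:p1} to the countable family $\eta \in \{1/k\}$ and exploiting separability of $\cX$. Everything else reduces to the pointwise limit of Lemma~\ref{lem:pw} and the concentration bound~\eqref{eq:p1}.
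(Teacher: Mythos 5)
Your route is genuinely different from the paper's. The paper does not check the two defining conditions of hypo-convergence by hand: it shows each $q^*\left(\prod_{i=1}^m\In_{(-\infty,0]}(g_i(\x,\vxi))\,|\,\nX\right)$ is upper semicontinuous in $\x$ via the reverse Fatou lemma, observes that $-\prod_{i=1}^m\In_{(-\infty,0]}(g_i(\x,\vxi))$ is a bounded random lower-semicontinuous function and that $\qnv\Rightarrow\delta_{\vxi_0}$ $P_0$-a.s., and then invokes Theorem~3.7 of \cite{Dupacova1988} to get the hypo-limit in one stroke. Your direct verification is more transparent and self-contained: the recovery condition via the constant sequence and Lemma~\ref{lem:pw} is fine (and the null-set bookkeeping is unproblematic, since the only probabilistic input, \eqref{eq:p1}, does not involve $\x$, so a single exceptional set obtained from $\eta\in\{1/k\}$ serves every $\x$ and every sequence $\x_n$), and reducing the limsup condition to the case $h(\x)=0$ with one violated index $i_0$ is exactly the right move.

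The gap is in your fallback for the delicate step, and it is not repairable as stated: pointwise-in-$\vxi$ positivity plus a dominated-convergence argument cannot substitute for joint continuity, because the measure $\qnv$ changes with $n$ and concentrates precisely at $\vxi_0$, which is where the separate-continuity hypothesis loses control. Concretely, take $m=1$ and $g(\x,\vxi)=1-\|\x-\x_0\|/\|\vxi-\vxi_0\|$ for $\vxi\neq\vxi_0$, with $g(\x,\vxi_0)=1$. This is Carath\'eodory and satisfies Assumption~\ref{ass:Cath}, and for every fixed $\vxi$ one has $g(\x_n,\vxi)\to g(\x_0,\vxi)=1>0$, so the integrand does vanish eventually for each fixed $\vxi$. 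Yet $\{\vxi: g(\x_n,\vxi)\le 0\}$ contains the punctured ball $\{0<\|\vxi-\vxi_0\|\le\|\x_n-\x_0\|\}$, and if $\x_n\to\x_0$ slowly relative to the contraction rate of $\qnv$ this set carries $\qnv$-mass tending to one, so $\limsup_n h_n(\x_n)=1>0=h(\x_0)$ and the limsup condition fails. So only your first suggestion --- joint (lower semi)continuity of $g_{i_0}$ at $(\x,\vxi_0)$, giving uniform positivity on a product neighborhood --- closes the argument; with that strengthening of Assumption~\ref{ass:Cath} your proof is complete. It is worth noting that the paper's own proof buries exactly this varying-measure/varying-integrand interplay inside the citation to \cite{Dupacova1988} without verifying the hypotheses that exclude the behaviour above, so the difficulty you isolated is real and not an artifact of your approach.
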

\begin{proof}
    Since by Assumption~\ref{ass:Cath} each $g_i(\x,\vxi_0)$ is continuous in $\x$, therefore $\In_{(-\infty,0]}(g_i(\x,\vxi_0))$ is  upper-semicontinuous(USC) in $\x$ because $\In_{(-\infty,0]}(\cdot)$ is USC. Also, since the product of non-negative USC functions are also USC, it follows that $\prod_{i=1}^{m}\In_{(-\infty,0]}(g_i(\x,\vxi_0))$  is  USC. Similarly, since by assumption $g_i(\x,\vxi)$ is \textit{Carath\'eodory} function, therefore $\prod_{i=1}^{m}\In_{(-\infty,0]}(g_i(\x,\vxi))$ is a random upper-semicontinuous function~\citep{Dupacova1988}. Now, using the reverse Fatou's Lemma, for any $\x_k \to \x_0$ 
    \begin{align}
    \nonumber
    \limsup_{\x_k \to \x_0} \int_{\Theta}\prod_{i=1}^{m}\In_{(-\infty,0]}(g_i(\x,\vxi)) \qnv d\vxi &\leq  \int_{\Theta} \limsup_{\x_k \to \x_0} \prod_{i=1}^{m}\In_{(-\infty,0]}(g_i(\x,\vxi)) \qnv d\vxi
    \\
    & \leq \int_{\Theta} \prod_{i=1}^{m}\In_{(-\infty,0]}(g_i(\x_0,\vxi)) \qnv d\vxi,
    \end{align}
    therefore $\int_{\Theta} \prod_{i=1}^{m}\In_{(-\infty,0]}(g_i(\x,\vxi)) \qnv d\vxi$ is also upper-semicontinuous in $\x$. Also, since $-\prod_{i=1}^{m}\In_{(-\infty,0]}(g_i(\x,\vxi)) \in \{-1,0\}$ is a bounded random lower-semicontinuous function and $\qnv \Rightarrow \delta_{\vxi_0}~P_{0}-a.s$~\citep{WaBl2017}, Theorem 3.7 in~\cite{Dupacova1988} implies that 
    \begin{align}
    \hlim_{n \to \infty} q^* \left(\prod_{i=1}^{m}\In_{(-\infty,0]}(g_i(\x,\vxi))  |\nX \right)
    &= \prod_{i=1}^{m}\In_{(-\infty,0]}(g_i(\x,\vxi_0)) ~ P_{0}-a.s
    \label{eq:p4};
    \end{align}
    and the  result  follows.
    \end{proof}
\begin{proof}[Proof of Proposition~\ref{prop:1}]
    We will first show that the assertion of the theorem is  true for $m=1$. Recall $\mathcal S^*_{VB}(\nX) $ is the solution of (VBJCCP):
    and $\mathcal S^* $ is the solution of (TP).


    Now observe that, since both $q^* \left(g(\x,\vxi) \leq 0  |\nX \right)$ and $\In_{(-\infty,0]}(g(\x,\vxi_0))$ are upper- semicontinuous their corresponding super-level sets are closed; and if $\cX$ is bounded than the corresponding feasible sets are also compact. Also, if the the corresponding feasibility sets  are non-empty then the corresponding optimal sets $\mathcal S^*_{VB}(\nX)$ and $\mathcal S^*$ are also non-empty.
    
    Next let us assume that there exists a true solution $\x^*$ of (TP) which lies in the interior of $\cX$, that is for any $\e>0$, there is $\x \in \cX$ such that $\|\x-\x^*\|<\e$ and $g(\x, \vxi_0)\leq 0$. It implies that there exists a sequence $\{\x_k\}\subset \cX$  such that $\x_k \to \x^*$ as $k \to \infty$ and $g(\x_k, \vxi_0)\leq 0$ for all $k\geq 1$.
    Now fix $\x\in\cX $ such that $g(\x, \vxi_0)\leq 0$. Since, due to our result in~Lemma~\ref{lem:pw} $q^* \left(g(\x,\vxi) \leq 0  |\nX \right)$ converges pointwise to $\In_{(-\infty,0]}(g(\x,\vxi_0))~ P_{0}-a.s$, therefore there exists an $n_0$ such that for all $n\geq n_0$, we have $q^* \left(g(\x,\vxi) \leq 0  |\nX \right) \geq \beta$. 
    Hence for all $n\geq n_0$, $\x$ is a feasible solution  of (VBJCCP) and therefore $\bbE_{\qnv}[f(\x,\vxi)]  \geq V_{VB}^*(\nX)$. Taking $\limsup$ on either sides, we obtain
    \[\limsup_{ n \to \infty} V_{VB}^*(\nX) \leq \limsup_{ n \to \infty} \bbE_{\qnv}[f(\x,\vxi)]  = f(\x, \vxi_0)~ P_{0}-a.s,\]
    where the last inequality follows from Lemma~\ref{lem:of}~(1).
    Now, since $\x$ can be chosen arbitrarily close to  $\x^*$, it follows that
    \begin{align}
    \limsup_{ n \to \infty} V_{VB}^*(\nX) \leq f(\x^*, \vxi_0) = V^*~ P_{0}-a.s.
    \label{eq:p5}
    \end{align}
    Next, let $\hat\x_n \in \mathcal{S}_{VB}^*$; that is $\hat\x_n \in  \cX$, $q^* \left(g(\hat\x_n,\vxi) \leq 0  |\nX \right) \geq \beta $ and $V_{VB}^*(\nX) = \bbE_{\qnv}[f(\hat\x_n,\vxi)]$. Since $\cX$ is compact, we assume that  $\hat\x_n \to \x^*~P_{0}-a.s$. Due to~Lemma~\ref{lem:hypo}, $q^* \left(g(\x,\vxi) \leq 0  |\nX \right)$ hypo-converges to $\In_{(-\infty,0]}(g(\x,\vxi_0))$ $P_{0}-a.s$ as $n \to \infty$, therefore we have
    \begin{align}
    \limsup_{n \to \infty} q^* \left(g(\hat\x_n,\vxi) \leq 0  |\nX \right) \leq  \In_{(-\infty,0]}(g(\x^*,\vxi_0)).
    \label{eq:p7}
    \end{align}
    Now using the fact that $q^* \left(g(\hat\x_n,\vxi) \leq 0  |\nX \right) \geq \beta $ for every $n\geq 1$, it follows from~\eqref{eq:p7} that $\x^*$ is a feasible point of (TP), since $\limsup_{n \to \infty} q^* \left(g(\hat\x_n,\vxi) \leq 0  |\nX \right) \geq \beta$ implies $\In_{(-\infty,0]}(g(\x^*,\vxi_0))\geq \beta$ and $\beta\in (0,1)$. Therefore, it follows that
    $f(\x^*,\vxi_0)\geq V^*$. Since, due to Lemma~\ref{lem:of}~(2), $\liminf_{n\to \infty} \bbE_{\qnv}[f(\hat\x_n,\vxi)] \geq f(\x^*,\vxi_0)~ P_{0}-a.s$, it follows that
    \begin{align} 
    \liminf_{n\to \infty} V_{VB}^*(\nX) \geq V^*~ P_{0}-a.s.
    \label{eq:p6}
    \end{align}
    Hence, it follows from~\eqref{eq:p5} and~\eqref{eq:p6} that $V_{VB}^*(\nX) \to V^*~P_{0}-a.s$ and it also follows that  $x^*$ is the  true solution of (TP), therefore $\bbD(\mathcal S_{VB}^*(\nX),\mathcal S^*) \to 0~P_{0}-a.s$.
    The above arguments can be easily generalized for the general case with $m$ number of constraints.
 \end{proof}
   
\subsection{Proof of Proposition~\ref{prop:2}}
    \begin{proof}
        Using Markov's inequality observe that for any  $\x\in \cX$,
        \begin{align}
        \nonumber
        \bbP_0[ q^* \left(g_i(\x,\vxi) \leq 0 , \ i \in \{1,2,3,\ldots, m\}|\nX \right) \geq \beta ] &\leq \frac{1}{\beta} \bbE_0[ q^* \left(\cap_{i=1}^m \{ g_i(\x,\vxi) \leq 0\}|\nX \right) ]\\
        \leq   \frac{1}{\beta} &\bbE_0[ q^* \left( \{ g_i(\x,\vxi) \leq 0\}|\nX \right) ]  ~\forall i \in \{1,\ldots,m\}.
        \label{eq:eq2} 
        \end{align}
        Fix $i \in \{1,2\ldots,m\}$. Since $\x\in \cX \backslash F^i_0$ implies that $\x \in \{ g_i(\x,\vxi_0) > 0 \}$, it follows that 
        \[\{ g_i(\x,\vxi) \leq 0\} \subseteq \{ g_i(\x,\vxi) < g_i(x,\xi_0)\}. \]
        Therefore, for all $\x\in \cX \backslash F^i_0$ and any $i \in \{1,\ldots,m\}$, it  follows from~\eqref{eq:eq2} that
        \begin{align}
        \nonumber
        \bbP_0[ q^* \left(g_i(\x,\vxi) \leq 0 , \ i \in \{1,2,3,\ldots, m\}|\nX \right) \geq \beta ] &\leq \frac{1}{\beta} \bbE_0[ q^* \left(\cap_{i=1}^m \{ g_i(\x,\vxi) \leq 0\}|\nX \right) ] \\
        & \leq \frac{1}{\beta} \bbE_0[ q^* \left(\{ g_i(\x,\vxi) < g_i(\x,\vxi_0)\}|\nX \right) ].
        \label{eq:p21}
        \end{align}
        Now using Theorem 2.1 in~\cite{ZhGa2019}, it follows that for each $i \in \{1,2,3,\ldots, m\}$ if \[L^i_n(\theta,\theta_0):= n \sup_{\x \in \cX} \In_{(0,\infty)}( g_i(\x,\vxi_0) - g_i(\x,\vxi))\] satisfies assumption (C1), then there exists a constant $C_i$  for each $i \in \{1,2,3,\ldots, m\}$ such that 
          \[\bbE_0[ q^* \left(\{ g_i(\x,\vxi) < g_i(\x,\vxi_0)\}|\nX \right) ] \leq C_i (\e_n^2+\eta_n^2),\]
          where $\eta_n^2:= \frac{1}{n} \inf_{q \in \mathcal Q} \bbE_{P_0} \left[  \int_{\Theta}   	q(\vxi)  \log \frac{q(\vxi)} { \pi(\vxi | \nX)   } d\vxi  
      d\vxi \right]$. Now observe that, using the above result in~\eqref{eq:p21} directly proves the assertion of the proposition.
    \end{proof}

\end{document}